\theoremstyle{definition}
\newtheorem{example}{Example}
\newmdenv[linewidth=2pt,roundcorner=10pt,backgroundcolor=white,font=\bfseries]{questionbox}
\newlength\myindent
\let\oldReturn\Return
\renewcommand{\Return}{\State\oldReturn}
\newtheorem{thm}{Theorem}[section]
\newtheorem{cor}[thm]{Corollary}
\newtheorem{lm}[thm]{Lemma}
\newtheorem*{clm*}{Claim}
\newtheorem*{remark}{Remark}
\newcommand{\cproof}{\noindent{\it Proof of Claim.}\ } 
\newcommand{\cqed}{\hfill\rule{1.3mm}{3mm}}
\theoremstyle{definition}
\newtheorem{df}[thm]{Definition}
\numberwithin{equation}{section}
\DeclareMathOperator*{\argmax}{arg\,max}
\definecolor{mypink2}{RGB}{219, 48, 122}
\newcounter{constant}
\title{Optimal L-Systems for Stochastic L-system Inference Problems}
\author[1]{Ali Lotfi}
\author[2]{Ian McQuillan}
\affil[1]{University of Saskatchewan, Department of Computer Science}
\affil[2]{University of Saskatchewan, Department of Computer Science}
\begin{document}
	\maketitle

\begin{abstract}
This paper presents two novel theorems that address two open problems
in stochastic Lindenmayer-system (L-system) inference, specifically focusing
on the construction of an optimal stochastic L-system capable of generating
a given sequence of strings. The first theorem delineates a method for crafting a stochastic L-system that has the maximum probability of a derivation producing a given
sequence of words through a single derivation (noting that multiple derivations may generate the same sequence). Furthermore, the second
theorem determines the stochastic L-systems with the highest probability of
producing a given sequence of words with multiple possible derivations. From
these, we introduce an algorithm to infer an optimal stochastic L-system from
a given sequence. This algorithm incorporates advanced optimization techniques, such as interior point methods, to ensure the creation of a stochastic L-system that maximizes the probability of generating the given sequence (allowing for multiple derivations). This
allows for the use of stochastic L-systems as a model for machine learning using only positive data for training. 
\end{abstract}

\section{Introduction}
Lindenmayer systems, abbreviated L-systems, were originally introduced by Aristid
Lindenmayer in 1968 \cite{lindenmayer1968mathematical}. They have been widely used to model the growth and development of various biological systems, particularly plant morphology \cite{prusinkiewicz2012algorithmic}. Letters of an L-system can be associated with drawing instructions, enabling L-system simulators such as vlab \cite{algorithmicbotany} to produce accurate graphical depictions of a plant’s development over time. 

L-systems can be deterministic where only one possible rule can be applied to each letter (possibly in certain contexts or conditions for context-sensitive L-systems). Deterministic L-systems only have one possible derivation. However, to model larger sets of events that can all occur 
(or for plants, populations of plants, or for an entire species), nondeterminstic L-systems are needed where more than one rule can be applied to any letter. 
Stochastic L-systems extend this concept by incorporating probabilistic rules \cite{prusinkiewicz1996stochastic}.

Stochastic L-system inference problems have numerous applications, with one of the most notable being the creation of stochastic L-systems that can simulate plant growth and produce structures akin to actual plants. A salient example of this is seen in \cite{nishida1980k0l}. In this particular study, the authors introduced a probabilistic 0L-system to explore the growth of scale leaves in the Japanese Cypress, Chamaecyparis obtusa. They crafted tree-like formations mirroring the diversity found in real trees. Upon observing the Japanese Cypress’s growth patterns, they painstakingly developed accurate probabilistic rewriting rules. The end result was a set of rules that adeptly mimicked real tree structures. The study also highlighted certain statistical attributes of the branching designs, such as the number of symbols and the occurrences of irregular and double branchings. The simulated tree branching patterns are closely aligned with those of natural trees, encompassing their varied forms.

Another important application of stochastic L-systems is that they can be used to generate synthetic images that we can be used to train artificial neural networks for the purposes of computer vision in plants. Indeed, supervised machine learning approaches require a large amount of annotated data, and with L-systems, any annotation labels built into the simulation are implicitly known. Therefore, simulated images can often be used on their own, or in combination with real annotated images to train artificial neural networks, thereby reducing or eliminating the time consuming process of labeling data. This has been successfully used in \textit{Arabidopsis thaliana} \cite{ubbens2018use}, and the general process of manually creating an L-system to fit a plant is described in \cite{cieslak2022system}.

Normally, constructing an L-system of a developing plant requires an expert in constructing L-systems, and the process is difficult and time consuming \cite{cieslak2022system}. If this could be done instead algorithmically from a sequence of images, it would dramatically open up the practicality of creating L-systems for a given biological process. As an intermediate step, one could infer from a sequence of strings generated from an unknown L-system, where those string could draw the simulated images with the simulator. However, even constructing an optimal stochastic L-system from a given sequence of strings is a challenging problem with significant practical implications for modeling and simulating complex
natural phenomena. Techniques for inferring different types of L-systems were surveyed in \cite{ben2009survey}. The majority of recent inference papers have focused on inferring deterministic L-systems, where there is only one possible derivation \cite{bernard2021techniques}. There has been one paper on inferring stochastic context-free L-systems \cite{bernard2023stochastic}. This paper focuses on a heuristic method for inferring L-systems without any mathematical guarantees of correctness.

This paper is a step towards an automated solution, as it presents two novel
theorems and an algorithm to address this challenge. The first theorem focuses on constructing a stochastic L-system that has a derivation with maximum probability of generating a given sequence of strings. The second theorem identifies the stochastic L-systems with the highest probability of generating the given sequence of strings. In this latter problem, the probability of generating a sequence of words is the sum of probabilities of generating the sequence across all possible derivations. With these theorems, an algorithm is proposed that incorporates nonlinear programming solvers using optimization techniques described in \cite{boyd2004convex}. This opens up the possibility of using stochastic L-systems as a machine learning technique. Here, the stochastic L-System is being used differently than the indirect approach using artificial neural networks as described above. Using L-Systems could be suitable for modeling tasks with parallel rewriting. Indeed the rules and probabilities are automatically inferred from only positive data used for training (a correct sequence of strings). The
parallel rewriting allows for considerable structure in strings so that only using positive data is worthwhile.

This paper is organized as follows: Section \ref{def-l} provides the necessary background on L-systems and stochastic L-systems, and introduces new notations. Section \ref{describe-problem} formally describes the two problems we will solving and presents the mathematical tools used to tackle the main problems. Sections \ref{sec:main-thm1} and \ref{sec:main-thm2} answer Questions 1 and 2 respectively and presents and proves novel theorems. Finally, Section \ref{sec:alg} introduces the algorithm for constructing an optimal stochastic L-system described in Theorem~\ref{thm:main1}.

\section{Preliminaries and Notation}\label{def-l}

In this section, we will define 0L-systems, D0L-systems and Stochastic 0L-systems, also known as S0L systems. Subsequently, we introduce new notations that will simplify the mathematical approach. L-Systems are defined using concepts from formal language theory, and we will give some definitions from that area. An alphabet $V$ is a finite set of symbols. Furthermore, $V^*$ represents the set of all strings using letters from $V$. A language over $V$ is a subset of $V^*$. Given any string $w$, we use the notation $|w|$ to refer to its length, and $w[i]$ to refer to the ith ($1\leq i \leq |w|$) character of $w$. We also refer to the cardinality of a finite set $X$ as simply $|X|$, which represents the number of elements in $X$. Given a function $f$, we denote its domain and codomain by $dom(f)$ and its $cod(f)$ respectively.

\subsection{0L-systems and D0L-systems}

In this part, we will define L-systems and notations similar to \cite{mcquillan2018algorithms} and \cite{prusinkiewicz2012algorithmic}. An L-
system that is context-free, or a 0L-system, applies productions to symbols without regard to their context within a string.  It is denoted by $G = (V, \omega, P)$, where $V$ is an alphabet, $\omega \in V^*$ is the axiom, and $P \subseteq V \times V^*$ is a finite set of productions. A production $(a, x) \in P$ is denoted by $a \rightarrow x$. The symbol $a$ is called the production predecessor, and the string $x$ is its successor. We assume that for each predecessor $a \in V$, there is at least one production $a \rightarrow x$ in $P$. However, for mathematical convenience, we sometimes consider variants of L-systems where this property is not true, and in this case, we call the L-system {\it partial}. If for each $a \in V$ there is exactly one such production, then the 0L-system $G$ is said to be deterministic, or a D0L-system. Given a string $a = a_1 \cdots a_n \in V^*$, we use the notation $a \Rightarrow x$ and say that $a$ directly derives $x$ if $x = x_1 \cdots x_n$, where $a_i \rightarrow x_i \in P$ for all $1 \leq i \leq n$. A derivation $d$ in $G$ consists of the following:

\begin{enumerate}
	\item A trace, which is a sequence $(w_0, \ldots, w_m)$ such that $w_{i-1}$ directly derives $w_{i}$ for $1 \leq i \leq m$ or in other words $w_0 \Rightarrow w_1 \Rightarrow \cdots \Rightarrow w_m$,
	\item a function $\sigma_{d}$ from $\{(j,l) \mid 0 \leq j < m , 1\leq l \leq |w_j| \}$ into $P$ such that if $w_j = a_1\cdots a_{|w_j|}$, $0\leq j<m$ and $1\leq l \leq |w_j|$, then we have that $w_{j+1} = x_1\ldots x_{|w_j|}$ where $\sigma_{d}(j,l) = a_{l} \rightarrow x_l$.
\end{enumerate}

Note that $\sigma_d$ describes the productions that are applied to each letter by the derivation. The set of productions $\{\sigma_{d}(j,l) \mid 0 \leq j < m , 1\leq l \leq |w_j| \}$ is the set of
productions used in a derivation $d$ and is denoted by $P(d)$. Furthermore, we will refer to the number of times a production $a\rightarrow x$ occurs in a derivation, as $\#_{d,a\rightarrow x}$.
We say that $G$ can generate a sequence of words $\theta$ if there exists a derivation $d$ in $G$ with $\theta$ as trace. In this case, we say $G$ can generate $\theta$ with $d$. Given a sequence $\theta=(w_0,\ldots,w_m)$
and 0L system G, if $G$ can generate $\theta$ as a trace, we say $\theta$ is compatible with $G$, and incompatible otherwise. Furthermore, we denote the set of derivations of $G$ that can generate $\theta$ as $Der_{G}(\theta)$. We will also introduce new notations to describe the relationship between the construction and partitioning of strings. For a string $x$ with $|x| = m$, we define $Enum(x)$ to be an enumeration of the characters appearing in $x$ as
follows:

\[Enum(x) = \{ (1,x[1]),\ldots,(m,x[m])\}.\]
Consider two strings, $x$ and $y$ such that $|x| = m$ and $|y| = n$, define $Par(x, y)$ as the set of all possible partitions of $y$ into $m$ substrings which some could be the empty string ($\varepsilon$). Specifically, we have:
\[Par(x,y) = \{ \{(1,y_1),\ldots,(m,y_{m})\} \mid y_1\cdots y_{m}=y\}.\]
Furthermore, we define $Par.Func(x, y)$ as:
{\small
	\[Par.Func(x,y) = \{ f:Enum(x)\mapsto Cod(f) \mid Cod(f) \in Par(x,y),f\left( \left( i,x[i] \right) \right) = \left(i,y_{i}\right)\}.\]

 Intuitively, $Par.Func(x,y)$ is the set of all functions that map the pair consisting
of a position in $x$ and its corresponding letter (such as $(i, x[i])$) to the i-th subword
of $y$ called $y_i$ , where $y_1\cdots y_{i-1}y_{i}y_{i+1}\cdots y_{n} = y$.
Now, let's move on to defining $Cand.Prod$, which represents the set of all productions that can be used to obtain a sequence. Specifically, we define $Cand.Prod(x, y)$ as:
{\small
 \[Cand.Prod(x,y) = \{ x[i] \rightarrow y_i \mid f \in Par.Func(x,y), f((i,x[i])) = (i,y_i) \text{ for } 1\leq i\leq m\}.\]
}
\begin{remark}
	Given an 0L-system $G$, if string $x$ is mapped to $y$ in one step through a derivation $d$, then $d$ can be transformed into an element of $Par.Func(x,y)$.  
\end{remark} 
\begin{df}\label{df:free-0l}
	Given a sequence $\theta=(w_0, \ldots, w_m)$ over $V$, we define $\mathcal{L}_{\theta}=(V, \omega, P)$ to be the free partial 0L system over $\theta$ if:
	\begin{enumerate}
		\item $\omega=w_0$
		\item $P= \bigcup_{0\leq i<m} Cand.Prod(w_i,w_{i+1})$\label{df:free-0l-itm1}
	\end{enumerate} 
\end{df}

Notice that in Item \ref{df:free-0l-itm1} of Definition \ref{df:free-0l}, for a sequence $\theta$, letter $A$ has no productions in the free 0L system if and only if letter $A$ only occurs in $w_m$ or no words. That is no, data is provided regarding what $A$ could be rewritten to as string $w_{m+1}$ was provided as input.
\subsection{Stochastic 0L-systems}

In this section, we define stochastic L-systems. The definitions we will provide have been thoroughly discussed in \cite{bernard2023stochastic,prusinkiewicz2012algorithmic}. A context-free stochastic L-system (S0L-system) is a quadruplet $G = (V, \omega, P, p)$. The alphabet $V$, axiom $\omega$, and set of productions $P$ are defined as in a standard 0L-system. Additionally, $G$ is equipped with a function $p : P \rightarrow (0, 1]$ that assigns to each production a probability of application. For any letter $a \in V$, it is assumed that the sum of probabilities of all productions with $a$ as predecessor is equal to 1. In other words, for each $a \in V$, $\sum_{ {a\rightarrow x \in P } }p(a\rightarrow x) = 1$. The definition of a derivation for a stochastic L-system is the same as that of a 0L-system,
but now there is a probability assigned to derivations. If the derivation $d$ has trace $\theta = (w_0, \ldots, w_m)$ and a function $\sigma_d$, then:

\[p(w_j \Rightarrow w_{j+1};d) = \prod_{l=1}^{|w_j|} p(\sigma_{d}(j,l)),\]
Furthermore,
\[p(d) = \prod_{j=0}^{m-1}\prod_{l=1}^{|w_j|} p(\sigma_{d}(j,l)).\]
Given a sequence \( \theta = (w_1, \ldots, w_m) \) over \( V \) and an S0L system \( G = (V, \omega, P, p) \),

\[ p(\mathcal{\theta}) = \sum_{d \in Der_G(\mathcal{\theta})} p(d). \]
Indeed, if we are given only \( \theta \), then there could be multiple derivations with \( \theta \) as trace, and the probability of $G$ generating \( \theta \) is the sum of the probabilities of each derivation of \( G \) with \( \theta \) as trace.

To provide a clearer understanding of S0L-systems, consider the following examples of S0L-systems generating a sequence. In the first example, the S0L generates the sequence through a single derivation, while in the second, it generates the sequence through two derivations. In both examples, characters of a certain color are mapped to substrings of the same color, indicating which production has been used.

\begin{example}
	Let $G_1 = (V, \omega, P, p)$, where $V = \{A, B, C\}$, $\omega = AAA$, and $P = \{A \rightarrow ABA, A \rightarrow B, A \rightarrow AC, B \rightarrow B, C \rightarrow C\}$, with $p(A \rightarrow ABA) = \frac{1}{3}$, $p(A \rightarrow B) = \frac{1}{3}$, and $p(A \rightarrow AC) = \frac{1}{3}$. Then $G_1$ generates the following two-step sequence with probability $\frac{1}{27}$:
	$$ {\color{red} A} {\color{blue} A} {\color{cyan} A} \Rightarrow {\color{red} ABA} {\color{blue} B} {\color{cyan} AC} $$
\end{example}

\begin{example}
	Let $G_2 = (V, \omega, P, p)$, where $V = \{A, B, C\}$, $\omega = AA$, and $P = \{A \rightarrow AB, A \rightarrow BA, A \rightarrow A, B \rightarrow B, C \rightarrow C\}$, with $p(A \rightarrow AB) = \frac{1}{3}$, $p(A \rightarrow BA) = \frac{1}{3}$ and $p(A \rightarrow A) = \frac{1}{3}$. Then $G_2$ generates the following two-step sequence in two different ways:
	$$ AA \Rightarrow ABA $$
	$$ {\color{red} A} {\color{blue} A} \Rightarrow {\color{red} AB} {\color{blue} A}, \textbf{\quad with probability} \quad \frac{1}{9}$$
	$$ {\color{red} A} {\color{blue} A} \Rightarrow {\color{red} A} {\color{blue} BA}, \textbf{\quad with probability} \quad \frac{1}{9}$$
	with total probability $\frac{2}{9}$.
\end{example}

\section{Main Questions and Mathematical Tools}\label{describe-problem}
Now, we can formally propose the two open questions related to S0L inference that are of interest. We intend to answer the following questions:

\begin{questionbox}
	Question 1: Given alphabet $V$, and a sequence $\theta = (w_0, \ldots, w_m)$ over $V$, construct an S0L-system $G = (V, \omega, P, p)$ and a derivation $d$ in $G$ that can generate $\theta$ with $d$ and $p(d)$ is maximal among all derivations $d^{\prime}$ from any S0L-systems $G^{\prime}$ that generate $\theta$ with $d^{\prime}$? 
\end{questionbox}

\begin{questionbox}
	Question 2  : Given alphabet $V$, and $\theta = (w_0, \ldots, w_m)$ over $V$, construct a S0L-system $G$ such that $p(\theta)$ is maximal over S0L-systems that can generate $\theta$?  
\end{questionbox}
Notice that the two question do look similar. However, in Question 2 we are looking for a S0L-system which has the highest probability of generating the sequence perhaps through many different derivations. On the other hand, in Question 1, we are looking for an S0L which has a derivation $d$ with the highest probability of generating the
sequence.

Hereafter, we provide the mathematical background necessary to address both questions. For the first question, Lemma~\ref{am-gm-main} is required. A detailed solution to the relevant optimization problem is available at \href{https://math.stackexchange.com/questions/2028410/maximize-prod-k-1m-x-in-i-subject-to-c-sum-k-1m-a-ix-i}{this link}, and we have independently verified its correctness.

\begin{lm}\label{am-gm-main}
	Let \( \{x_i\}_{i=1}^m \) be a set of variables constrained by 
	\[
	\sum_{i=1}^m a_i x_i = C, \quad x_i \geq 0, \quad a_i > 0, \quad C > 0,
	\]
	where \( a_i \) are constants. Then the product \( \prod_{i=1}^m x_i^{n_i} \) is maximized when
	\[
	x_i = \frac{C n_i}{N a_i}, \quad \text{for all } i = 1, \ldots, m,
	\]
	where $N = \sum_{j=1}^m n_j$.
	The maximum value of \( \prod_{i=1}^m x_i^{n_i} \) under these constraints is
	\[
	\left( \frac{C}{N} \right)^N \prod_{i=1}^m (\frac{n_i}{a_i})^{n_i}.
	\]
\end{lm}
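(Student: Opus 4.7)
The plan is to take logarithms and reduce the problem to the weighted AM--GM inequality. Writing $F(x)=\prod_{i=1}^{m} x_i^{n_i}$, the logarithm $\log F(x)=\sum_i n_i\log x_i$ is strictly concave on the positive orthant, and the constraint set is a bounded affine slice of the nonnegative orthant, so a maximum exists. First I would dispose of degenerate indices: if $n_i=0$, then $x_i^{n_i}\equiv 1$ contributes nothing to the product and the coordinate can be set to $0$ as in the conclusion; if every $n_i=0$, the claim is vacuous, so from here on I assume $n_i>0$ for every $i$ and hence $N>0$. On the boundary where some $x_i=0$ the product vanishes, so the maximum is attained in the interior.

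In the interior I would rescale by $y_i = a_i x_i / n_i$, under which the linear constraint $\sum a_i x_i = C$ becomes $\sum n_i y_i = C$. The weighted AM--GM inequality with weights $n_i$ (summing to $N$) then gives
\[
\prod_{i=1}^{m} y_i^{n_i} \;\leq\; \left(\frac{\sum_{i=1}^{m} n_i y_i}{N}\right)^{N} \;=\; \left(\frac{C}{N}\right)^{N},
\]
with equality if and only if all $y_i$ are equal, forcing $y_i = C/N$ and hence $x_i = C n_i / (N a_i)$. Unwinding the substitution delivers the stated upper bound $\prod_i x_i^{n_i}\leq (C/N)^{N}\prod_i (n_i/a_i)^{n_i}$ and simultaneously identifies the maximizer.

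The only real subtlety is ensuring that the AM--GM bound is tight enough to pin down both the sharp value and the unique extremizer; this is handled by the standard equality clause of the weighted inequality. For readers who prefer calculus, the same optimizer emerges from a Lagrange-multiplier computation: $n_i/x_i = \lambda a_i$ yields $x_i = n_i/(\lambda a_i)$, and the constraint fixes $\lambda = N/C$, while strict concavity of $\log F$ on the convex feasible region guarantees that this critical point is the global maximum. A direct substitution of $x_i = C n_i/(N a_i)$ into $F$ then reproduces the claimed maximal value $(C/N)^{N}\prod_{i=1}^{m}(n_i/a_i)^{n_i}$, completing the argument.
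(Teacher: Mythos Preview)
Your argument is correct: the substitution $y_i = a_i x_i / n_i$ reduces the problem cleanly to the weighted AM--GM inequality, the equality clause identifies the unique interior maximizer, and your handling of the degenerate $n_i=0$ indices and the boundary (where the product vanishes) is sound. The Lagrange-multiplier alternative you sketch is also complete once strict concavity of $\log F$ is invoked.

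As for comparison with the paper: the paper does not actually give a proof of this lemma. It states the result, points to an external Math~StackExchange post for ``a detailed solution,'' and says the authors ``independently verified its correctness.'' So your proposal is strictly more than what the paper provides; either of your two routes (weighted AM--GM after rescaling, or Lagrange multipliers with concavity) would serve as a self-contained replacement for the external citation.
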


In order to answer Question 2, we will introduce and work with minimizing or maximizing posynomials which are of the following form:
\[p(x) = \sum_{i=1}^n c_i x_1^{a_{i,1}} x_2^{a_{i,2}} \cdots x_m^{a_{i,m}}.\]

To answer the second question, we will translate our problem into an optimization problem of maximizing a posynomial subject to a series of posynomial constraints. Once this translation has been done, optimal solutions could be approximated using nonlinear programming solvers, such as the interior point method(\cite{boyd2007tutorial}). In our work,
we will use the Posy-Solver implementation described in \cite{boyd2007tutorial}. In case of Question 1, having Lemma~\ref{am-gm-main} will help lead to Theorem \ref{thm:main1} and Corollary \ref{cor:weight-dist} below. For Question 1, we do not need to consider the space of continuous probabilities for different productions coming from different derivations, and can for example perform a search on the discrete space of derivations to find $d^{*}$ in Corollary \ref{cor:weight-dist}.

\section{Determining the Optimal Derivation}\label{sec:main-thm1}

We are now ready to answer the problem posed in Question 1, which involves constructing a stochastic L-system that has a derivation with the highest probability of deriving the given sequence. First, we will prove Theorem \ref{thm:main1}, which shows that the probability of generating a sequence by a derivation is bounded above by a certain value, which we will show is sharp. Furthermore, Corollary \ref{cor:weight-dist} demonstrates how the probabilities should be distributed for this upper bound to be obtained.

\begin{thm}\label{thm:main1}
	
	Let $\theta =(w_0, \ldots, w_m)$, $V$ be the characters in $\theta$, $ G = (V, \omega, P, p)$ an
	S0L system, and $d$ a derivation which generates $\theta$. Then:
	\[p(d) \leq \left(\prod_{v \in V} \frac{1}{\sum_{i=0}^{m-1}|w_i|_{v}^{\sum_{i=0}^{m-1}|w_i|_{v}}}\right) 
	\left( \prod_{a\rightarrow y \in P(d)} \#_{d,a\rightarrow y }^{\#_{d,a\rightarrow y }}\right).\]
\end{thm}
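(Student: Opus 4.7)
The plan is to reduce the inequality to a collection of independent optimization problems indexed by $v \in V$, each of which fits the template of Lemma \ref{am-gm-main} with $a_i = 1$ and $C = 1$.

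First, I would rewrite $p(d)$ by regrouping. By definition, $p(d) = \prod_{j=0}^{m-1}\prod_{l=1}^{|w_j|} p(\sigma_d(j,l))$. Collecting identical productions, this equals $\prod_{a \rightarrow y \in P(d)} p(a \rightarrow y)^{\#_{d,a\rightarrow y}}$. Splitting the outer product according to the predecessor letter gives
\[
p(d) = \prod_{v \in V} \Bigl(\prod_{v \rightarrow y \in P(d)} p(v \rightarrow y)^{\#_{d,v\rightarrow y}}\Bigr),
\]
so each inner factor depends only on the probabilities assigned to productions sharing the predecessor $v$.

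Second, I would apply Lemma \ref{am-gm-main} to each inner factor separately. Fix $v \in V$ and let $v \rightarrow y_1, \ldots, v \rightarrow y_{k_v}$ enumerate the productions of $P(d)$ with predecessor $v$; set $x_i = p(v \rightarrow y_i)$ and $n_i = \#_{d,v\rightarrow y_i}$. The S0L axiom $\sum_{v \rightarrow y \in P} p(v \rightarrow y) = 1$ implies $\sum_{i=1}^{k_v} x_i \leq 1$, and since the objective $\prod_i x_i^{n_i}$ is monotone in each $x_i$, the supremum is attained at $\sum_i x_i = 1$. This is precisely the hypothesis of Lemma \ref{am-gm-main} with $a_i = 1$ and $C = 1$, yielding
\[
\prod_{v \rightarrow y \in P(d)} p(v \rightarrow y)^{\#_{d,v\rightarrow y}} \leq \Bigl(\frac{1}{N_v}\Bigr)^{N_v} \prod_{v \rightarrow y \in P(d)} \#_{d,v\rightarrow y}^{\,\#_{d,v\rightarrow y}},
\]
where $N_v = \sum_i n_i$. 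A direct counting argument then identifies $N_v$: every occurrence of $v$ in any of $w_0, \ldots, w_{m-1}$ is rewritten by exactly one production of $d$ whose predecessor is $v$, so $N_v = \sum_{i=0}^{m-1} |w_i|_v$. Multiplying the per-letter bounds over $v \in V$ and consolidating the numerators into a single product over $P(d)$ produces exactly the stated inequality.

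The main obstacle is not analytic but rather bookkeeping: I will need to verify the degenerate cases where some letter $v$ does not appear as a predecessor in the derivation (so $N_v = 0$ and the inner product is empty), which is handled by the usual conventions that an empty product equals $1$ and $0^0 = 1$. I also want to check that the strict positivity assumption $x_i > 0$ of Lemma \ref{am-gm-main} is compatible with $p : P \rightarrow (0,1]$ whenever $n_i \geq 1$, so the lemma applies on the relevant face of the simplex. Finally, the bound is sharp: choosing $p(v \rightarrow y_i) = n_i / N_v$ for each $v$ and each $v \rightarrow y_i \in P(d)$ (and distributing the probabilities of unused productions arbitrarily) attains equality, which is the content of the subsequent corollary.
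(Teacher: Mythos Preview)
Your proposal is correct and follows essentially the same route as the paper: factor $p(d)$ by predecessor letter, apply Lemma~\ref{am-gm-main} to each factor with $a_i=1$ and $C=1$, identify $N_v=\sum_{i=0}^{m-1}|w_i|_v$ by counting occurrences of $v$ being rewritten, and multiply the per-letter bounds. The only cosmetic difference is that the paper first sets $p(a\rightarrow y)=0$ for $a\rightarrow y\in P\setminus P(d)$ so that $\sum_{v\rightarrow y\in P(d)}p(v\rightarrow y)=1$ holds exactly, whereas you reach the same simplex face via monotonicity; your explicit treatment of the $N_v=0$ degenerate case is a small addition the paper leaves implicit.
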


\begin{proof}
	We can assume that for any $a\rightarrow y \notin P \setminus P(d)$, $p(a\rightarrow y)=0$ since such productions either decrease $p(d)$ or do not change it, and therefore, we will assume they are all zero. Let multivariate polynomial $f(\overline{X})$ be $\prod_{a\rightarrow y \in P(d)} X_{a\rightarrow y}^{\#_{d,a\rightarrow y}}$
	, and for a fixed $a \in V$, define $f_{a}(\overline{X}_{a}) = \prod_{a\rightarrow y \in P(d)} X_{a\rightarrow y}^{\#_{d,a\rightarrow y}}$, then we have:
	
	\[f(\overline{X}) = \prod_{a \in V}f_{a}(\overline{X}_{a}).\]  
	Furthermore, for each $a \in V$, we have:
	\[\sum\limits_{\substack{a\rightarrow y \in P(d)}} X_{a\rightarrow y} = 1,\]
	and since for $a,b \in V$ where $a\neq b$, $f_a$ and $f_b$ are not sharing common variables, then the following optimization problems are independent for different elements of $V$ such as $a$:
	\begin{align*}
		&\textbf{Maximize: }  \prod\limits_{\substack{a\rightarrow y \in P(d)}} X_{a\rightarrow y}^{\#_{d,a\rightarrow y}}\\
		&\textbf{Subject to: $\left(\sum_{a\rightarrow y \in P(d)} X_{a\rightarrow y}= 1\right)$ and } \left( \forall a\rightarrow y \in P(d)~,~  X_{a\rightarrow y} \geq 0 \right)\\	
	\end{align*}
	Then, for each \( a \in V \) and its corresponding optimization problem, we can apply Lemma~\ref{am-gm-main} (by setting $N = \sum\limits_{\substack{a\rightarrow y \in P(d)}} \#_{d,a\rightarrow y}$, $a_i = 1$ for $1 \leq i \leq N$, and $C = 1$), and obtain the optimal solution which is as following:
	\[X^*_{a\rightarrow y } = \frac{\#_{d,a\rightarrow y }}{\sum\limits_{\substack{a\rightarrow y \in P(d)}} \#_{d,a\rightarrow y }}\text{ for } a\rightarrow y \in P(d).\]
	Notice that $\sum\limits_{\substack{a\rightarrow y \in P(d)}} \#_{d,a\rightarrow y }$ is the number of productions used in $d$ which have $a$ as predecessor which is the same as number of times character $a$ appears in one of the strings $w_0,\ldots,w_{m-1}$. Therefore; $\sum\limits_{\substack{a\rightarrow y \in P(d)}} \#_{d,a\rightarrow y } = \sum_{i=0}^{m-1}|w_i|_{a}$. As a result, we have the following:
	\[X^*_{a\rightarrow y } =  \frac{\#_{d,a\rightarrow y }}{\sum_{i=0}^{m-1}|w_i|_{a}} \text{ for } a \rightarrow y \in P(d).\]
	This means that for $a \in V$, if $\sum_{a\rightarrow y \in P(d)} X_{a\rightarrow y}= 1$, then by Lemma~\ref{am-gm-main} we have:
	\[f_{a}(X_a) \leq f_{a}(\overline{X}^*_{a}) = \frac{1}{\sum_{i=0}^{m-1}|w_i|_{a}^{\sum_{i=0}^{m-1}|w_i|_{a}}}\prod_{a\rightarrow y \in P(d)} \#_{d,a\rightarrow y }^{\#_{d,a\rightarrow y }}.\]
	Since $f(\overline{X}) = \prod_{a \in V}f_{a}(\overline{X}_{a})$, and $f_{a}(\overline{X}_{a})$s do not share variables for different $a\in V$, then $\{\overline{X}^*_{a} : a\in V\}$ is a solution to the following optimization problem:
	\begin{align*}
		&\textbf{Maximize: }  f(\overline{X})\\
		&\textbf{Subject to: $\left(\sum_{a\rightarrow y \in P(d)} X_{a\rightarrow y}= 1\right)$ \text{for all} $a \in V$}\\
		&\textbf{Subject to: $ X_{a\rightarrow y} \geq 0$ \text{for all} $a\rightarrow y \in P(d)$}\\	
	\end{align*}
	On the other hand, take $Y_{a\rightarrow y}= p(a\rightarrow y)$ and notice that since $G_p$ is an S0L, then $\overline{Y}$ satisfies the constraints of the last optimization problem, and we also have $p(d) = f(\overline{Y})$, therefore, we have:
	\begin{eqnarray}
		\nonumber p(d) = f(\overline{Y}) &\leq& f(\overline{X}^*)\\
		\nonumber &=& \prod_{a\in V}\left( \frac{1}{\sum_{i=0}^{m-1}|w_i|_{a}^{\sum_{i=0}^{m-1}|w_i|_{a}}}\prod_{a\rightarrow y \in P(d)} \#_{d,a\rightarrow y }^{\#_{d,a\rightarrow y }}\right)\\
		\nonumber &=& \prod_{a\in V}\left( \frac{1}{\sum_{i=0}^{m-1}|w_i|_{a}^{\sum_{i=0}^{m-1}|w_i|_{a}}}\right) \prod_{a\in V} \left(\prod_{a\rightarrow y \in P(d)} \#_{d,a\rightarrow y }^{\#_{d,a\rightarrow y }}\right)\\
		\nonumber &=& \prod_{a\in V}\left( \frac{1}{\sum_{i=0}^{m-1}|w_i|_{a}^{\sum_{i=0}^{m-1}|w_i|_{a}}}\right)  \prod_{a\rightarrow y \in P(d)} \#_{d,a\rightarrow y }^{\#_{d,a\rightarrow y }}\\
	\end{eqnarray}
	finishing the proof.
\end{proof}

\begin{cor}\label{cor:weight-dist}
	
	Let $\theta=(w_0, \ldots, w_m)$ be a sequence, and $V$ be the characters in the $\theta$, then the derivation $d^*$ and $L=(V, \omega, P^{*} , p^{*})$ has the highest probability of generating $\theta$ if:
	\begin{enumerate}
		\item $d^* =\argmax_{d \in Der_{\mathcal{L}_{\theta}}(\mathcal{\theta})}\left(\prod_{a \in V} \frac{1}{\sum_{i=0}^{m-1}|w_i|_{a}^{\sum_{i=0}^{m-1}|w_i|_{a}}}\right) 
		\left( \prod_{a\rightarrow y \in P(d)} \#_{d,a\rightarrow y }^{\#_{d,a\rightarrow y }}\right)$
		\item $\omega = w_0$ , $P^{*} = P(d^*)$, and $p^*(a\rightarrow y) = \frac{\#_{d^*,a\rightarrow y }}{\sum_{i=0}^{m-1}|w_i|_{a}}$ for $a\rightarrow y \in P^{*}$.
	\end{enumerate} 
\end{cor}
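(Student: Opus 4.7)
The plan is to leverage Theorem~\ref{thm:main1} directly. The crucial observation is that the upper bound in that theorem depends on a derivation $d$ only through the multiset $P(d)$ and the counts $\#_{d,a\rightarrow y}$, and is otherwise independent of the probability assignment $p$. Thus maximizing $p(d)$ over all pairs $(G,d)$ that generate $\theta$ splits cleanly into a discrete stage (which productions to use and with what multiplicities) and a continuous stage (which probabilities to assign so that the upper bound is attained).

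For the discrete stage, I would first argue that any derivation $d$ of any S0L-system generating $\theta$ is, after forgetting probabilities, a derivation of the free partial 0L-system $\mathcal{L}_\theta$. Indeed, a production $a\rightarrow x$ applied by $d$ at step $j$ to the $l$th letter of $w_j$ comes from some element of $Par.Func(w_j,w_{j+1})$, hence lies in $Cand.Prod(w_j,w_{j+1})\subseteq P(\mathcal{L}_\theta)$ by Definition~\ref{df:free-0l}. So the supremum of the Theorem~\ref{thm:main1} bound over all $(G,d)$ coincides with its maximum over the finite set $Der_{\mathcal{L}_\theta}(\theta)$, which is precisely the choice of $d^*$ in item~1.

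For the continuous stage, I set $\omega=w_0$, $P^* = P(d^*)$, and $p^*(a\rightarrow y) = \#_{d^*,a\rightarrow y}/\sum_{i=0}^{m-1}|w_i|_a$ -- the optimizer produced by the equality case of Lemma~\ref{am-gm-main}. I would first check that $p^*$ is a valid stochastic assignment: for any $a$ occurring in some $w_i$ with $i<m$,
\[
\sum_{a\rightarrow y\in P^*} p^*(a\rightarrow y) \;=\; \frac{\sum_{a\rightarrow y\in P^*}\#_{d^*,a\rightarrow y}}{\sum_{i=0}^{m-1}|w_i|_a} \;=\; 1,
\]
because the numerator counts the total number of $a$-rewrites performed by $d^*$ and the denominator counts the total occurrences of $a$ available to be rewritten. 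For letters $a$ that appear only in $w_m$ (or nowhere), I append a placeholder production $a\rightarrow a$ with probability $1$; this makes $L$ a proper (non-partial) S0L-system without affecting $d^*$ or its probability. A direct computation then yields
\[
p(d^*) \;=\; \prod_{a\rightarrow y\in P^*}\!\left(\frac{\#_{d^*,a\rightarrow y}}{\sum_{i=0}^{m-1}|w_i|_a}\right)^{\#_{d^*,a\rightarrow y}} \;=\; \prod_{a\in V}\frac{1}{\left(\sum_{i=0}^{m-1}|w_i|_a\right)^{\sum_{i=0}^{m-1}|w_i|_a}}\prod_{a\rightarrow y\in P^*}\#_{d^*,a\rightarrow y}^{\#_{d^*,a\rightarrow y}},
\]
which is exactly the bound of Theorem~\ref{thm:main1} evaluated at $d^*$. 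Since every competitor $(G,d)$ is bounded above by that same expression (with $d^*$ replaced by its own $d$), and $d^*$ maximizes this expression over $Der_{\mathcal{L}_\theta}(\theta)$, the pair $(d^*, L)$ is optimal.

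I do not expect a substantial obstacle: Lemma~\ref{am-gm-main} already resolved the continuous half, and the discrete reduction to $\mathcal{L}_\theta$ is immediate from Definition~\ref{df:free-0l}. The only delicate bookkeeping is (i) handling letters that appear solely in $w_m$ so that $L$ is a proper S0L-system, and (ii) verifying that the prescribed $p^*$ genuinely achieves equality in the Theorem~\ref{thm:main1} bound rather than being merely feasible -- both of which follow from the equality case of Lemma~\ref{am-gm-main}.
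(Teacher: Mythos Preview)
Your proposal is correct and matches the paper's approach: the paper states the corollary without proof, treating it as immediate from the proof of Theorem~\ref{thm:main1}, where the optimal values $X^*_{a\rightarrow y}=\#_{d,a\rightarrow y}/\sum_{i=0}^{m-1}|w_i|_a$ are already derived via Lemma~\ref{am-gm-main}. Your write-up simply makes explicit the two-stage decomposition (discrete choice of $d^*$ over $Der_{\mathcal{L}_\theta}(\theta)$, then the continuous optimizer for $p^*$) and adds the bookkeeping for letters appearing only in $w_m$, which the paper leaves implicit.
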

\section{Determining the Optimal S0L-System}\label{sec:main-thm2}

In this section, we will focus on answering the second question. The second question involves constructing a S0L-system $G_p$ over $V$ such that $p(\theta)$ is maximal, given alphabet $V$ and a sequence of strings $\theta = (w_0, \ldots, w_m)$. This question can be mathematically translated to an optimization problem for maximizing a posynomial objective function, which is the overall probability of an L-system generating sequence $\theta$ with linear constraints. The linear constraints capture the restriction that the probability of different productions for a fixed character should sum up to one. Given this intuition, we are now ready to present Theorem~\ref{thm:main2}, which offers a formal translation of the second problem to the language of posynomial optimization.

\begin{thm}\label{thm:main2}
	Let sequence $\theta=(w_0, \ldots, w_m)$, and $\mathcal{L}_{\theta}=(V, \omega, P)$ to be the free 0L
	system over $\theta$, then $G^*=(V, \omega^*, P^*,p^*)$ has highest probability of generating $\theta$ if:
	
\begin{enumerate}
		\item\label{main-thm-itm1} $\omega^*=\omega$
		\item\label{main-thm-itm2} $P^* = \{ a \rightarrow y \in P: X_{a\rightarrow y}^* \neq 0\}$ where $\{ X^*_{a\rightarrow y} : a \rightarrow y \in P\}$ is any solution to the following optimization problem:		
	
	\begin{align*}
		&\textbf{Maximize: }  \sum_{d \in  Der_{\mathcal{L}_{\theta}}(\mathcal{\theta})}  \prod_{j=0}^{m-1} \prod_{l=1}^{|w_j|} X_{\sigma_{d}(j,l)}\\
		&\textbf{Subject to, for all $a \in V$: $\left(\sum_{a\rightarrow y \in P} X_{a\rightarrow y}= 1\right)$ and   $X_{a\rightarrow y} \geq 0$ for all $a\rightarrow y \in P$ }\\	
\end{align*}

	\item\label{main-thm-itm3} for all $a \rightarrow y \in P^*$, $p^*(a\rightarrow y) = X^*_{a\rightarrow y}$	
\end{enumerate}
\end{thm}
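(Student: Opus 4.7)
The plan is to reduce the problem of maximizing $p(\theta)$ over all S0L-systems to the constrained optimization problem in the statement, and then read off the optimal L-system from any maximizer. First, I would observe that any derivation of $\theta$ starts from its axiom, so every candidate S0L-system that can generate $\theta$ must have axiom $w_0$; this gives Item~\ref{main-thm-itm1}.

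Second, I would argue that without loss of generality every candidate $G' = (V, w_0, P', p')$ satisfies $P' \subseteq P$. Indeed, for any such $G'$ that generates $\theta$ and each letter $a$ occurring in some $w_j$ with $j < m$, set $Z_a := \sum_{a \to y \in P' \cap P} p'(a \to y)$. By the remark following Definition~\ref{df:free-0l}, every derivation of $\theta$ in $G'$ uses only productions in $P$, so $Z_a > 0$. Define $G''$ by restricting to productions in $P' \cap P$ and renormalizing: $p''(a \to y) := p'(a \to y)/Z_a$. Since $Z_a \leq 1$, each factor satisfies $p''(\sigma_d(j,l)) \geq p'(\sigma_d(j,l))$, so $p_{G''}(d) \geq p_{G'}(d)$ for every $d \in Der_{G'}(\theta) = Der_{G''}(\theta)$, and hence $p_{G''}(\theta) \geq p_{G'}(\theta)$. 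Productions whose predecessor never appears among $w_0, \ldots, w_{m-1}$ are irrelevant to $p(\theta)$ and can be padded arbitrarily.

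Third, with $P' \subseteq P$ established, I would encode the probabilities as variables $X_{a \to y} := p'(a \to y)$, where $X_{a \to y} = 0$ encodes the absence of $a \to y$ from $P'$. The probability of the trace $\theta$ then becomes the posynomial
\[
p_{G'}(\theta) = \sum_{d \in Der_{\mathcal{L}_{\theta}}(\theta)} \prod_{j=0}^{m-1} \prod_{l=1}^{|w_j|} X_{\sigma_d(j,l)},
\]
since any $d \in Der_{\mathcal{L}_{\theta}}(\theta) \setminus Der_{G'}(\theta)$ uses a production with $X_{a \to y} = 0$ and contributes zero. The admissibility conditions on $G'$ translate exactly to $\sum_{a \to y \in P} X_{a \to y} = 1$ for each $a \in V$ together with $X_{a \to y} \geq 0$. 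The feasible region is a product of simplices (compact) and the objective is continuous, so a maximizer $\{X^*_{a \to y}\}$ exists; setting $\omega^* = w_0$, $P^* = \{a \to y \in P : X^*_{a \to y} > 0\}$, and $p^*(a \to y) = X^*_{a \to y}$ yields a valid S0L-system whose probability of generating $\theta$ equals the optimum and therefore dominates every competitor. The main obstacle is making the second step precise: one must verify that zeroing out probabilities of productions outside $P$ and renormalizing does not decrease the probability of any surviving derivation, which rests on the termwise monotonicity $p''(\sigma_d(j,l)) \geq p'(\sigma_d(j,l))$ contributed by the factor $1/Z_a \geq 1$.
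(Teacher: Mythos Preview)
Your proof is correct and follows the same strategy as the paper: show that every competing S0L-system corresponds to a feasible point of the stated optimization problem, so that any optimizer $\{X^*_{a\to y}\}$ dominates all competitors. Your explicit renormalization by $Z_a$ to discard productions outside $P$ is in fact more careful than the paper's version---which handles this reduction somewhat informally by removing productions involving letters outside $V$ and padding with zero-probability rules---and you additionally supply the compactness argument for existence of a maximizer, which the paper leaves implicit.
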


\begin{proof}
	Any L-system generating $\theta$ needs to have $w_0$ as the axiom; therefore, Item 1 holds. Next we need to show that $G^*$ is indeed an S0L, which means that for each $a \in V$ , $\sum_{a\rightarrow y}p^*(a\rightarrow y)=1$ and $p^*(a\rightarrow y)\geq 0$ for all $a\rightarrow y \in P$. The last claim can be deduced simply from $p^*(a\rightarrow y)= X^*_{a\rightarrow y}, a\rightarrow y \in P^*$ solution to the optimization problem while satisfying the constraints.

	Next, we need to show S0L system \( G^* \) has the highest probability of generating \( \theta \). Consider an arbitrary S0L system \( G_1 = (V_1, \omega, P_1, p_1) \), and notice that we can assume \( V_1 = V \), since any production in \( P_1 \) that is of the form \( a \rightarrow y \) where $a$ or $b$ are not in $V$ only decreases the probability of $G_1$ generating $\theta$, and therefore, $G_1$ can be adjusted so that $V_1 = V$ while either preserving or increasing the probability
	of $G_1$ generating $\theta$. Therefore, we can assume $V_1 = V$. Furthermore, if there are productions $a\rightarrow y \in P \setminus P_1$, we will add them to $G_1$ and define $p_1(a\rightarrow y)=0$ on such productions, and notice that this adjustment does not change the probability
	of $G_1$ generating $\theta$. Take $Y_{a\rightarrow y}=p_1(a\rightarrow y)$ for all $a\rightarrow y \in P$ , and since $G_1$ is an S0L system, we have that $Y_{a\rightarrow y}$ satisfies the constraints of the optimization problem in the statement of this theorem. Therefore, since $X^*$  was an optimal solution, we
	have the following:
	
	\begin{eqnarray}
		\nonumber p^{*}(\theta) &=& \sum_{d \in Der_{G^*}(\mathcal{\theta})}  \prod_{j=0}^{m-1} \prod_{l=1}^{|w_j|} p^*(\sigma_{d}(j,l))\\
		\nonumber  &=& \sum_{d \in Der_{G^*}(\mathcal{\theta})}  \prod_{j=0}^{m-1} \prod_{l=1}^{|w_j|} X^*_{\sigma_{d}(j,l)}\\
		\nonumber  &=& \sum_{d \in Der_{\mathcal{L}_{\theta}}(\mathcal{\theta})}  \prod_{j=0}^{m-1} \prod_{l=1}^{|w_j|} X^*_{\sigma_{d}(j,l)}\\
		\nonumber &\geq&  \sum_{d \in Der_{\mathcal{L}_{\theta}}(\mathcal{\theta})}  \prod_{j=0}^{m-1} \prod_{l=1}^{|w_j|} Y_{\sigma_{d}(j,l)}\\
		\nonumber &=&  \sum_{d \in Der_{G_1}(\mathcal{\theta})}  \prod_{j=0}^{m-1} \prod_{l=1}^{|w_j|} Y_{\sigma_{d}(j,l)}\\
		\nonumber &=& p_1(\theta).
	\end{eqnarray}
	Since $G_1$ was arbitrary, then we have that $G^*$ has highest probability of generating $\theta$ among S0L systems finishing the proof of the Theorem.
\end{proof}
\section{Algorithm}\label{sec:alg}

Given Theorem~\ref{thm:main2}, we are now prepared to present Algorithm~\ref{alg:opt-l} for constructing an optimal stochastic L-system based on a sequence $\theta$ in order to solve Question 2. The algorithm first scans all the characters in the sequence and constructs the axiom by using the first word of $\theta$. Then, as hinted by Theorem~\ref{thm:main2}, the problem is translated into an optimization problem to which we can apply a nonlinear programming solver. Once the optimal solution is found, the probability for each production is substituted in.
\begin{algorithm}[H]
	\caption{Stochastic L-System Inference}
	\label{alg:opt-l}
	\begin{algorithmic}[1]
		\Procedure{InferStochasticLSystem}{$w_0, w_1, \dots, w_m$}
		\State $V \gets \{w_{i}[j] \mid 0 \leq i \leq m , 1 \leq j \leq |w_{i}|\}$
		\State $\omega \gets w_0$
		\State $\theta \gets (w_0, w_1, \dots, w_m)$
		\State construct $\mathcal{L}_{\theta}=(V, \omega, P)$
		\State solve the optimization problem in Theorem ~\ref{thm:main2} , and let $\{ X^*_{a\rightarrow y} : a\rightarrow y \in P \}$ be an optimal solution
		\For{$a\rightarrow y \in P$}
		\If{$X^*_{a\rightarrow y} \neq 0$}
		\State append $a \rightarrow y$ to $P^*$
		\State $p^*(a\rightarrow y) \gets X^*_{a\rightarrow y}$
		\EndIf
		\EndFor 
		\State \textbf{return} $(V, \omega, P^*, \pi^*)$
		\EndProcedure
	\end{algorithmic}
\end{algorithm}
To verify correctness, we recognize that $V$ must include all characters appearing in all strings in the sequence, except possibly the last one. Additionally, the algorithm correctly selects the first string as the axiom. The algorithm then needs to find the S0L system with the highest probability of generating $\theta$. According to Theorem \ref{thm:main2}, we know that the probability of an S0L system generating $\theta$ is:

\[\sum_{d \in  Der_{\mathcal{L}_{\theta}}(\mathcal{\theta})}  \prod_{j=0}^{m-1} \prod_{l=1}^{|w_j|} X_{\sigma_{d}(j,l)}.\]

The algorithm uses this optimal solution to determine the probabilities of different productions occurring. Once we find these probabilities, those that are 0 simply indicate productions that should not be included in our production set. Note that the linear constraints in the nonlinear programming problem ensure that $X_{a\rightarrow y}$ values correspond to production probabilities and must sum to one for each character. After finding After finding $X^*_{a\rightarrow y}$, we only select productions where $X^*_{a\rightarrow y} \neq 0$. This optimal solution maximizes the probability of our S0L system generating $\theta$, thus answering question 2. It's important to note that this algorithm refers to $\mathcal{L}_{\theta}$, whose construction requires polynomial time work with respect to the size of $\theta$.

\section{Conclusion}

In conclusion, this study has significantly enhanced the understanding and practical application of stochastic L-system construction. The theorems presented in this paper not only address the construction of these systems given a sequence but also guarantees the highest probability of generating the specified sequence. The two main questions in this area were targeted which are as following:

\begin{enumerate}
	\item Given a sequence, how can we construct an S0L and a derivation that generate the sequence with the highest probability?
	\item How can we construct an S0L system that, for a given sequence, has the highest probability of generating the sequence?
\end{enumerate}
These questions were answered through the theorems we proved in this study. Furthermore, we proposed an algorithm that utilizes advanced optimization techniques based on nonlinear programming solvers. This algorithm serves as a practical tool for implementing the theoretical insights presented in this study, enabling the stochastically optimal construction of L-systems. However, there is still an open challenge to further enhance the algorithm’s speed and devise faster methods for constructing stochastically optimal L-systems given a sequence.

\bibliographystyle{plain}
\bibliography{test}

\end{document}